\title{Unsupervised Learning of Lagrangian Dynamics from Images for Prediction and Control}
\author{
  Yaofeng Desmond Zhong \\
  Princeton University \\
  \texttt{y.zhong@princeton.edu}
  \And
  Naomi Ehrich Leonard \\
  Princeton University \\
  \texttt{naomi@princeton.edu}
}
\begin{document}

\maketitle

\begin{abstract}
Recent approaches for modelling dynamics of physical systems with neural networks enforce Lagrangian or Hamiltonian structure to improve prediction and generalization. However, when coordinates are embedded in high-dimensional data such as images, these approaches either lose interpretability or can only be applied to one particular example. We introduce a new unsupervised neural network model that learns Lagrangian dynamics from images, with interpretability that benefits prediction and control. The model infers Lagrangian dynamics on generalized coordinates that are simultaneously learned with a coordinate-aware variational autoencoder (VAE). The VAE is designed to account for the geometry of physical systems composed of multiple rigid bodies in the plane. By inferring interpretable Lagrangian dynamics, the model learns physical system properties, such as kinetic and potential energy, which enables long-term prediction of dynamics in the image space and synthesis of energy-based controllers. 
\end{abstract}

\section{Introduction}
Humans can learn to predict the trajectories of mechanical systems, e.g., a basketball or a drone, from high-dimensional visual input, and learn to control the system, e.g., catch a ball or maneuver a drone, after a small number of interactions with those systems. We hypothesize that humans use domain-specific knowledge, e.g., physics laws, to achieve efficient learning. Motivated by this hypothesis, in this work, we propose incorporation of physics priors to learn and control dynamics from image data, aiming to gain interpretability and data efficiency. Specifically, we incorporate Lagrangian dynamics as the physic prior, which enables us to represent a broad class of physical systems. Recently, an increasing number of works \citep{lutter2018deep, 2020arXiv200304630C, NIPS2019_9672, Zhong2020Symplectic, zhong2020dissipative} have incorporated Lagrangian/Hamiltonian dynamics into learning dynamical systems from coordinate data, to improve prediction and generalization.
These approaches, however, require coordinate data, which are not always available in real-world applications. Hamiltonian Neural Network (HNN) \cite{NIPS2019_9672} provides a single experiment with image observations, which requires a modification in the model. This modification is hard to generalize to systems with multiple rigid bodies. Hamiltonian Generative Network (HGN) \cite{Toth2020Hamiltonian} learns Hamiltonian dynamics from image sequences. However, the dimension of latent generalized coordinates is $4\times 4 \times 16 = 256$, making interpretation difficult. Moreover, both HNN and HGN focus on prediction and have no design of control. Another class of approaches learn physical models from images, by either learning the map from images to coordinates with supervision on coordinate data \citep{NIPS2017_7040} or learning the coordinates in an unsupervised way but only with translational coordinates \citep{NIPS2019_8304, NIPS2019_9256}. The unsupervised learning of rotational coordinates such as angles are under-explored in the literature.

In this work, we propose an unsupervised neural network model that learns coordinates and Lagrangian dynamics on those coordinates from images of physical systems in motion in the plane. The latent dynamical model enforces Lagrangian dynamics, which benefits long term prediction of the system. As Lagrangian dynamics commonly involve rotational coordinates to describe the changing configurations of objects in the system, we propose a coordinate-aware variational autoencoder (VAE) that can infer interpretable rotational and translational coordinates from images without supervision. The interpretable coordinates together with the interpretable Lagrangian dynamics pave the way for introducing energy-based controllers of the learned dynamics. 
\subsection{Related work}
\paragraph{Lagrangian/Hamiltonian prior in learning dynamics}
To improve prediction and generalization of physical system modelling, a class of approaches has incorporated the physics prior of Hamiltonian or Lagrangian dynamics into deep learning. Deep Lagrangian Network \citep{lutter2018deep} and Lagrangian Neural Network \citep{2020arXiv200304630C} learn Lagrangian dynamics from position, velocity and acceleration data. Hamiltonian Neural Networks \citep{NIPS2019_9672} learn Hamiltonian dynamics from position, velocity and acceleration data. By leveraging ODE integrators, Hamiltonian Graph Networks \citep{2019arXiv190912790S} and Symplectic ODE-Net \citep{Zhong2020Symplectic} learn Hamiltonian dynamics from only position and velocity data. All of these works (except one particular experiment in HNN \citep{NIPS2019_9672}) require direct observation of low dimensional position and velocity data. Hamiltonian Generative Network \citep{Toth2020Hamiltonian} learns Hamiltonian dynamics from images.
\vspace{-3pt}
\paragraph{Unsupervised learning of dynamics}
We assume we are given no coordinate data and aim to learn  coordinates and dynamics in an unsupervised way. With little position and velocity data, \citet{NIPS2018_7948} learn underlying dynamics. However, the authors  observe that their model fails to learn meaningful dynamics when there is no supervision on position and velocity data at all. Without supervision, \citet{NIPS2015_5964} and \citet{Levine2020Prediction} learn locally linear dynamics and \citet{Jaques2020Physics-as-Inverse-Graphics:} learns unknown parameters in latent dynamics with a given form. 
\citet{Kossen2020Structured} extracts position and velocity of each object from videos and learns the underlying dynamics. \citet{2019arXiv190509275W} adopts an object-oriented design to gain data efficiency and robustness. \citet{NIPS2016_6418},  \citet{pmlr-v80-sanchez-gonzalez18a} and  \citet{NIPS2017_7040} learn dynamics with supervision by taking into account the prior of objects and their relations. These object-oriented designs focus little on rotational coordinates. Variational Integrator Network \citep{2019arXiv191009349S} considers rotational coordinates but cannot handle systems with multiple rotational coordinates. 
\vspace{-3pt}
\paragraph{Neural visual control} 
Besides learning dynamics for prediction, we would like to learn how control input influences the dynamics and to design control laws based on the learned model. This goal is relevant to neural motion planning and model-based reinforcement learning from images. 
PlaNet \cite{pmlr-v97-hafner19a} learns latent dynamical models from images and designs control input by fast planning in the latent space.
Kalman VAE \cite{fraccaro2017disentangled} can potentially learn locally linear dynamics and control from images, although no control result has been shown. 
Dreamer \cite{Hafner2020Dream} is a scalable reinforcement learning agent which learns from images using a world model. 
\citet{ebert2018visual} propose a self-supervised model-based method for robotic control. We leave the comparison of our energy-based control methods and these model-based control methods in the literature to future work.
\subsection{Contribution} The main contribution of this work is two-fold. First, we introduce an unsupervised learning framework to learn Lagrangian dynamics from image data for prediction. The Lagrangian prior conserves energy with no control applied; this helps learn more accurate dynamics as compared to a MLP dynamical model. Moreover, the coordinate-aware VAE in the proposed learning framework infers interpretable latent rigid body coordinates in the sense that a coordinate encoded from an image of a system in a position with high potential energy has a high learned potential energy, and vice versa. This interpretability enables us to design energy-based controllers to control physical systems to target positions. We implement this work with PyTorch \cite{NEURIPS2019_9015} and refactor our code into PyTorch Lightning format \cite{falcon2019pytorch}, which makes our code easy to read and our results easy to reproduce. The code for all experiments is available at \url{https://github.com/DesmondZhong/Lagrangian_caVAE}.

\section{Preliminary concepts}
\subsection{Lagrangian dynamics}
Lagrangian dynamics are a reformulation of Newton's second law of motion. The configuration of a system in motion at time $t$ is described by generalized coordinates $\mathbf{q}(t)=(q_1(t), q_2(t), ..., q_m(t))$, where $m$ is the number of degrees of freedom (DOF) of the system. For planar rigid body systems with $n$ rigid bodies and $k$ holonomic constraints, the DOF is $m = 3n-k$. From D'Alembert's principle, the equations of motion of the system, also known as the Euler-Lagrange equation, are 
\begin{equation}
\label{eq:EL}
    \frac{\mathrm{d}}{\mathrm{d}t} \Big( \frac{\partial L}{\partial \dot{\mathbf{q}}} \Big) - \frac{\partial L}{\partial \mathbf{q}} = \mathbf{Q}^{nc},
\end{equation}
where the scalar function $L(\mathbf{q}, \dot{\mathbf{q}})$ is the Lagrangian, $\dot{\mathbf{q}} = \mathrm{d}\mathbf{q}/\mathrm{d}t$, and $\mathbf{Q}^{nc}$ is a vector of non-conservative generalized forces. The Lagrangian $L(\mathbf{q}, \dot{\mathbf{q}})$ is the difference between kinetic energy $T(\mathbf{q}, \dot{\mathbf{q}})$ and potential energy $V(\mathbf{q})$. For rigid body systems, the Lagrangian is 
\begin{equation}
\label{eq:Lag}
    L(\mathbf{q}, \dot{\mathbf{q}}) = T(\mathbf{q}, \dot{\mathbf{q}}) - V(\mathbf{q}) = \frac{1}{2} \dot{\mathbf{q}}^T \mathbf{M}(\mathbf{q})\dot{\mathbf{q}} - V(\mathbf{q}),
\end{equation}
where $\mathbf{M}(\mathbf{q})$ is the mass matrix. In this work, we assume that the control inputs are the only non-conservative generalized forces, i.e., $\mathbf{Q}^{nc} = \mathbf{g}(\mathbf{q})\mathbf{u}$, where $\mathbf{g}(\mathbf{q})$ is the input matrix and $\mathbf{u}$ is a vector of control inputs such as forces or torques. Substituting $\mathbf{Q}^{nc} = \mathbf{g}(\mathbf{q})\mathbf{u}$ and $L(\mathbf{q}, \dot{\mathbf{q}})$ from (\ref{eq:Lag}) into (\ref{eq:EL}), we get the equations of motion in the form of $m$ second-order ordinary differential equations (ODE):
\begin{equation}
    \label{eqn:2nd-order}
    \ddot{\mathbf{q}} =  \mathbf{M}^{-1}(\mathbf{q})\Big(- \frac{\mathrm{d} \mathbf{M}(\mathbf{q})}{\mathrm{d} t}\dot{\mathbf{q}} + \frac{1}{2} \frac{\partial}{\partial \mathbf{q}} \big(\dot{\mathbf{q}}^T \mathbf{M}(\mathbf{q}) \dot{\mathbf{q}} \big)- \frac{\mathrm{d} V(\mathbf{q})}{\mathrm{d} \mathbf{q}} + \mathbf{g}(\mathbf{q})\mathbf{u} \Big).\footnote{In the  2020 NeurIPS Proceedings, there is an error in Equation \eqref{eqn:2nd-order}, which is corrected here. The main results and conclusion of the paper are not affected -- please see our Github repo  \url{https://github.com/DesmondZhong/Lagrangian_caVAE}. We thank Rembert Daems for spotting the error in Equation \eqref{eqn:2nd-order}.}
\end{equation} 
\subsection{Control via energy shaping}
\label{sec:energy-based-control}
Our goal is to control the system to a reference configuration $\mathbf{q}^{\star}$, inferred from a goal image $\mathbf{x}^{\star}$, based on the learned dynamics. As we are essentially learning the kinetic and potential energy associated with the system, we can leverage the learned energy for control by \textit{energy shaping} \citep{ortega2001putting, 895562}.

If $\mathrm{rank}(\mathbf{g}(\mathbf{q})) = m$, we have control over every DOF and the system is fully actuated. For such systems, control to the reference configuration $\mathbf{q}^{\star}$ can be achieved with the control law $\mathbf{u}(\mathbf{q}, \dot{\mathbf{q}}) = \pmb{\beta}(\mathbf{q}) + \mathbf{v}(\dot{\mathbf{q}})$, where $\pmb{\beta}(\mathbf{q})$ is the \emph{potential energy shaping} and $\mathbf{v}(\dot{\mathbf{q}})$ is the \emph{damping injection}. The goal of potential energy shaping is to let the system behave as if it is governed by a desired Lagrangian $L_d$ with no non-conservative generalized forces. 
\begin{equation}
    \label{eqn:equivalence}
    \frac{\mathrm{d}}{\mathrm{d}t} \Big( \frac{\partial L}{\partial \dot{\mathbf{q}}} \Big) - \frac{\partial L}{\partial \mathbf{q}} = \mathbf{g}(\mathbf{q}) \pmb{\beta}(\mathbf{q}) 
    \quad \Longleftrightarrow \quad
    \frac{\mathrm{d}}{\mathrm{d}t} \Big( \frac{\partial L_d}{\partial \dot{\mathbf{q}}} \Big) - \frac{\partial L_d}{\partial \mathbf{q}} = 0, 
\end{equation}
where the desired Lagrangian has desired potential energy $V_d(\mathbf{q})$: 
\begin{equation}
    L_d(\mathbf{q}, \dot{\mathbf{q}}) = T(\mathbf{q}, \dot{\mathbf{q}}) - V_d(\mathbf{q}) = \frac{1}{2} \dot{\mathbf{q}}^T \mathbf{M}(\mathbf{q})\dot{\mathbf{q}} - V_d(\mathbf{q}).
\end{equation}
The difference between $L_d$ and $L$ is the difference between $V$ and $V_d$, 
which explains the name potential energy shaping: $\pmb{\beta}(\mathbf{q})$ shapes the potential energy $V$ of the original system into a desired potential energy $V_d$. The potential energy $V_d$ is designed to have a global minimum at $\mathbf{q}^{\star}$. By the equivalence (\ref{eqn:equivalence}), we get
\begin{equation}
    \label{eqn:potential_shaping}
    \pmb{\beta}(\mathbf{q}) = \mathbf{g}^T (\mathbf{g}\mathbf{g}^T)^{-1}
    \Big(\frac{\partial V}{\partial \mathbf{q}} - \frac{\partial V_d}{\partial \mathbf{q}}\Big).
\end{equation}
With only potential energy shaping, the system dynamics will oscillate around $\mathbf{q}^{\star}$.\footnote{Please see Supplementary Materials for more details.} The purpose of damping injection $\mathbf{v}(\dot{\mathbf{q}})$ is to impose convergence, exponentially in time, to $\mathbf{q}^{\star}$. The damping injection has the form 
\begin{equation}
    \mathbf{v}(\dot{\mathbf{q}}) = -\mathbf{g}^T (\mathbf{g}\mathbf{g}^T)^{-1}
    (\mathbf{K}_d \dot{\mathbf{q}}).
    \label{eqn:damping}
\end{equation}
For underactuated systems, however, this controller design is not valid since $\mathbf{g}\mathbf{g}^T$ will not be invertible. In general, we also need kinetic energy shaping \citep{895562} to achieve a control goal. 
\paragraph{Remark} The design parameters here are $V_d$ and $\mathbf{K}_d$. A quadratic desired potential energy 
\begin{equation}
    V_d(\mathbf{q}) = \frac{1}{2}(\mathbf{q} - \mathbf{q}^{\star})^T \mathbf{K}_p (\mathbf{q} - \mathbf{q}^{\star}),
    \label{eqn:quadratic_V_d}
\end{equation}
results in a controller design 
\begin{equation}
    \mathbf{u}(\mathbf{q}, \dot{\mathbf{q}}) =  
    \mathbf{g}^T (\mathbf{g}\mathbf{g}^T)^{-1}
    \left(
    \frac{\partial V}{\partial \mathbf{q}} 
    - \mathbf{K}_p (\mathbf{q} - \mathbf{q}^{\star}) -\mathbf{K}_d \dot{\mathbf{q}}
    \right).
    \label{eqn:ext_forcing}
\end{equation}
This can be interpreted as a proportional-derivative (PD) controller with energy compensation.
\subsection{Training Neural ODE with constant control}
\label{sec:train_ode_const}
The Lagrangian dynamics can be formulated as a set of first-order ODE 
\begin{equation}
    \label{eqn:state-space}
    \dot{\mathbf{s}} = \mathbf{f} (\mathbf{s}, \mathbf{u}),
\end{equation}
where $\mathbf{s}$ is a state vector and unknown vector field $\mathbf{f}$, which is a vector-valued function, can be parameterized with a neural network $\mathbf{f}_{\psi}$. We leverage Neural ODE, proposed by \citet{NIPS2018_7892}, to learn the function $\mathbf{f}$  that explains the trajectory data of $\mathbf{s}$. The idea is to predict future states from an initial state by integrating the ODE with an ODE solver. As all the operations in the ODE solver are differentiable, $\mathbf{f}_{\psi}$ can be updated by back-propagating through the ODE solver and approximating the true $\mathbf{f}$. However, Neural ODE cannot be applied to  $\eqref{eqn:state-space}$ directly since the input dimension and the output dimension of  $\mathbf{f}$ are not the same. \citet{Zhong2020Symplectic} showed that if the control remains constant for each trajectory in the training data, Neural ODE can be applied to the following augmented ODE:
\begin{equation}
    \begin{pmatrix}
        \dot{\mathbf{s}} \\ \dot{\mathbf{u}}
    \end{pmatrix} = 
    \begin{pmatrix}
        \mathbf{f}_{\psi} (\mathbf{s}, \mathbf{u}) \\ \mathbf{0}
    \end{pmatrix}
    = \Tilde{\mathbf{f}}_{\psi}(\mathbf{s}, \mathbf{u}).
\end{equation}
With a learned $\mathbf{f}_{\psi}$, we can apply a controller design $\mathbf{u} = \mathbf{u}(\mathbf{s})$ that is not constant, e.g., an energy-based controller, by integrating the ODE $\dot{\mathbf{s}} = \mathbf{f} (\mathbf{s}, \mathbf{u}(\mathbf{s}))$.

\section{Model architecture}
Let $\mathbf{X} = ((\mathbf{x}^0, \mathbf{u}^c), (\mathbf{x}^1, \mathbf{u}^c)), ..., (\mathbf{x}^{T_{\textrm{pred}}}, \mathbf{u}^c))$ be a given sequence of image and control pairs, where $\mathbf{x}^\tau$, $\tau = 0, 1, \ldots, T_{\textrm{pred}}$, is the image of the trajectory of a rigid-body system under constant control $\mathbf{u}^c$ at time $t=\tau \Delta t$. 
From $\mathbf{X}$ we want to learn a state-space model \eqref{eqn:state-space} 
that governs the time evolution of the rigid-body system dynamics. 
We assume the number of rigid bodies $n$ is known and the segmentation of each object in the image is given. Each image can be written as $\mathbf{x}^\tau = (\mathbf{x}_1^\tau,..., \mathbf{x}_n^\tau)$, where $\mathbf{x}_i^\tau\in \mathbb{R}^{n_x}$ contains visual information about the $i$th rigid body at $t=\tau \Delta t$ and $n_x$ is the dimension of the image space.

In Section~\ref{sec:latent_lag}, we parameterize $\mathbf{f}(\mathbf{s}, \mathbf{u})$ with a neural network and design the architecture of the neural network such that  \eqref{eqn:state-space} is constrained to follow Lagrangian dynamics, where the physical properties such as mass and potential energy are learned from data. Since we have no access to state data, we need to infer states $\mathbf{s}$, i.e., generalized coordinates and velocities from image data. Sections~\ref{sec:encoder} and \ref{sec:decoder} introduce an inference model (encoder) and a generative model (decoder) pair. Together they make up a variational autoencoder (VAE)  \citep{DBLP:journals/corr/KingmaW13} to infer the generalized coordinates in an unsupervised way. Section~\ref{sec:vel_est} introduces a simple  estimator of velocity from learned generalized coordinates. The VAE and the state-space model are trained together, as described in Section~\ref{sec:train_obj}. The model architecture is shown in Figure~\ref{fig:architecture}.

\begin{figure}
    \centering
    \includegraphics[width=0.90\linewidth]{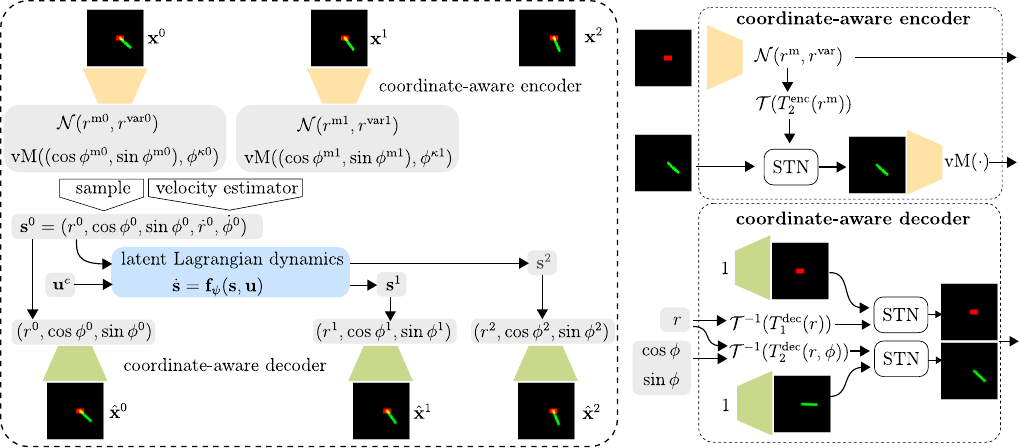}
    \caption{\textbf{Left}: Model architecture. (Using CartPole as an illustrative example.)  The initial state $\mathbf{s}^0$ is constructed by sampling the distribution and a velocity estimator. The latent Lagrangian dynamics take $\mathbf{s}^0$ and the constant control $\mathbf{u}^c$ for that trajectory and predict future states up to $T_{\textrm{pred}}$. The diagram shows the $T_{\textrm{pred}}=2$ case.
    \textbf{Top-right}: The coordinate-aware encoder  estimates the distribution of generalized coordinates.
    \textbf{Bottom-right}: The initial and predicted generalized coordinates are decoded to the reconstruction images with the coordinate-aware decoder.}
    \label{fig:architecture}
\end{figure}
\subsection{Latent Lagrangian dynamics}
\label{sec:latent_lag}
The Lagrangian dynamics  \eqref{eqn:2nd-order} yield a second-order ODE. From a model-based perspective, they can be re-written as a first-order ODE  \eqref{eqn:state-space} by choosing the state as $\mathbf{s} = (\mathbf{q}, \dot{\mathbf{q}})$. However, from a data-driven perspective, this choice of state is problematic when the generalized coordinates involve angles. Consider the pendulum task in Figure~\ref{fig:ex_diagram} as an example where we want to infer the generalized coordinate, i.e., the angle of the pendulum $\phi$, from an image of the pendulum. The map from the image to the angle $\phi$ should be bijective. However, if we choose the state as $\mathbf{s} = (\phi, \dot{\phi})$, the map is not bijective, since $\phi$ and $\phi+2\pi$ map to the same image. If we restrict $\phi\in [-\pi, \pi )$, then the dynamics are not continuous when the pendulum moves around the inverted position. 
Inspired by \citet{Zhong2020Symplectic}, we solve this issue by proposing the state as $\mathbf{s} = (\cos \phi, \sin \phi, \dot{\phi})$, such that the mapping from the pendulum image to $(\cos \phi, \sin \phi)$ is bijective. 

In general, for a planar rigid-body system with $\mathbf{q} = (\mathbf{r}, \pmb{\phi})$, where $\mathbf{r} \in \mathbb{R}^{m_R}$ are translational generalized coordinates and $\pmb{\phi} \in \mathbb{T}^{m_T}$ are rotational generalized coordinates , the proposed state is $\mathbf{s} = (\mathbf{s}_1, \mathbf{s}_2, \mathbf{s}_3, \mathbf{s}_4, \mathbf{s}_5) = (\mathbf{r}, \cos \pmb{\phi}, \sin \pmb{\phi}, \dot{\mathbf{r}}, \dot{\pmb{\phi}})$, where $\cos$ and $\sin$ are applied element-wise to $\pmb{\phi}$. To enforce Lagrangian dynamics in the state-space model, we take the derivative of $\mathbf{s}$ with respect to $t$ and substitute in \eqref{eqn:2nd-order} to get
\begin{equation}
    \label{eqn:angle-aware}
    \dot{\mathbf{s}} \!=\! 
    \begin{pmatrix}
        \mathbf{s}_4 \\
        - \mathbf{s}_3 \circ \mathbf{s}_5 \\
        \mathbf{s}_2 \circ \mathbf{s}_5 \\
        \mathbf{M}^{-1}(\mathbf{s}_1\!,\! \mathbf{s}_2\!,\!\mathbf{s}_3)\Big(\!-\!\frac{1}{2} \frac{\mathrm{d} \mathbf{M}(\mathbf{s}_1\!,\! \mathbf{s}_2,\!,\!\mathbf{s}_3)}{\mathrm{d} t}
        \begin{pmatrix}
            \mathbf{s}_4 \\
            \mathbf{s}_5 \\
        \end{pmatrix}
        \!+\! 
        \begin{pmatrix}
            -\frac{\partial V(\mathbf{s}_1\!,\! \mathbf{s}_2 \!,\!\mathbf{s}_3)}{\partial \mathbf{s}_1} \\
            \frac{\partial V(\mathbf{s}_1\!,\! \mathbf{s}_2 \!,\!\mathbf{s}_3)}{\partial \mathbf{s}_2} \mathbf{s}_3 \!-\! \frac{\partial V(\mathbf{s}_1\!,\! \mathbf{s}_2\!,\! \mathbf{s}_3)}{\partial \mathbf{s}_3} \mathbf{s}_2   
        \end{pmatrix}
        \!+\! \mathbf{g}(\mathbf{s}_1\!,\! \mathbf{s}_2\!,\!\mathbf{s}_3)\mathbf{u} \Big)
    \end{pmatrix}
\end{equation}
where $\circ$ is the element-wise product. We use three neural networks, $\mathbf{M}_{\psi_1}(\mathbf{s}_1, \mathbf{s}_2, \mathbf{s}_3)$, $V_{\psi_2}(\mathbf{s}_1, \mathbf{s}_2, \mathbf{s}_3)$, and $\mathbf{g}_{\psi_3}(\mathbf{s}_1, \mathbf{s}_2, \mathbf{s}_3)$, to approximate the mass matrix, the potential energy and the input matrix, respectively. Equation \eqref{eqn:angle-aware} is then a state-space model parameterized by a neural network  $\dot{\mathbf{s}} = \mathbf{f}_{\psi} (\mathbf{s}, \mathbf{u})$. It can be trained as stated in Section~\ref{sec:train_ode_const} given the initial condition $\mathbf{s}^0 = (\mathbf{r}^0, \cos \pmb{\phi}^0, \sin \pmb{\phi}^0, \dot{\mathbf{r}}^0, \dot{\pmb{\phi}^0})$ and $\mathbf{u}^{c}$. Next, we present the means to infer 
$\mathbf{s}^0$ from the given images.
%
\subsection{Coordinate-aware encoder}
\label{sec:encoder}
From a latent variable modelling perspective, an image $\mathbf{x}$ of a rigid-body system can be generated by first specifying the values of the generalized coordinates and then assigning values to pixels based on the generalized coordinates with a generative model - the decoder. In order to infer those generalized coordinates from images, we need an inference model - the encoder. We perform variational inference with a coordinate-aware VAE.

The coordinate-aware encoder infers a distribution on the generalized coordinates. The Gaussian distribution is the default for modelling latent variables in VAE. This is appropriate for modelling a translational generalized coordinate $r$ since $r$ resides in $\mathbb{R}^1$. However, this is not appropriate for modelling a rotational generalized coordinate $\phi$ since a Gaussian distribution is not a distribution on $\mathbb{S}^1$. If we use a Gaussian distribution to model hyperspherical latent variables, the VAE performs worse than a traditional autoencoder \citep{s-vae18}. Thus, to model $\phi$, we use the von Mises (vM) distribution, a family of distributions on $\mathbb{S}^1$. Analogous to a Gaussian distribution, a von Mises distribution is characterized by two parameters: $\mu \in \mathbb{R}^2$, $||\mu||^2=1$ is the mean,  and $\kappa \in \mathbb{R}_{\geq 0}$ is the concentration around $\mu$. The von Mises distribution reduces to a uniform distribution when $\kappa=0$.

In our model, for a rotational generalized coordinate $\phi$, we assume a posterior distribution $Q(\phi| \mathbf{x}) = \textrm{vM}((\cos \phi^\textrm{m}, \sin \phi^\textrm{m}), \phi^{\kappa})$ with prior $P(\phi) = \textrm{vM}(\cdot, 0) = U(\mathbb{S}^1)$. For a translational generalized coordinate $r$, we assume a posterior distribution $Q(r | \mathbf{x}) = \mathcal{N}(r^\textrm{m}, r^\textrm{var})$ with prior $\mathcal{N}(0, 1)$.
We denote the joint posterior distribution as $Q(\mathbf{q} | \mathbf{x})$ and joint prior distribution as $P(\mathbf{q})$.
The encoder is a neural network that takes an image as input and provides the parameters of the distributions as output. A black-box neural network encoder would not be able to learn interpretable generalized coordinates for a system in motion described by Lagrangian dynamics. Instead, we propose a coordinate-aware encoder by designing the architecture of the neural network to account for the geometry of the system. This is the key to interpretable encoding of generalized coordinates. 

\begin{wrapfigure}[11]{r}{0.60\textwidth}
    \centering
    \vspace{-12pt}
    \includegraphics[width=0.60\textwidth]{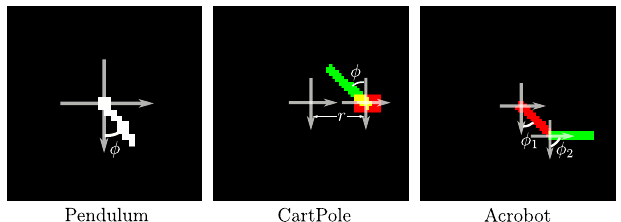}
    \vspace{-15pt}
    \caption{ One choice of generalized coordinates and their corresponding reference frames in three example systems}
    \label{fig:ex_diagram}
\end{wrapfigure}
Recall that each generalized coordinate $q_j$ specifies the position/rotation of a rigid body $i_j$ in the system. In principle, the coordinate can be learned from the image segmentation of $i_j$. However, the reference frame of a generalized coordinate might depend on other generalized coordinates and change across images. Take the CartPole example in Figure \ref{fig:ex_diagram} as motivation. The system has two DOF and natural choices of generalized coordinates are the horizontal position of the cart $q_1 = r$ and the angle of the pole $q_2 = \phi$. 
The origin of the reference frame of $r$ is the center of the image, which is the same across all images. The origin of the reference frame of $\phi$, however, is the center of the cart, which is not the same across all the images since the cart can move. In order to learn the angle of the pole, we can either use a translation invariant architecture such as Convolution Neural Networks (CNN) or place the center of the encoding attention window of the pole segmentation image at the center of the cart. The former approach does not work well in extracting generalized coordinates.\footnote{Here we expect to encode the angle of the pole from a pole image regardless of where it appears in the image. As the translation invariance of CNN is shown by \citet{2018arXiv180101450K} to be primarily dependent on data augmentation, the encoding of generalized coordinates might not generalize well to unseen trajectories. Also, in general we need both translation invariance and rotation invariance, a property that CNN do not have.} Thus, we adopt the latter approach, where we shift our encoding attention window horizontally with direction and magnitude given by generalized coordinate $r$, before feeding it into a neural network to learn $\phi$. 
In this way we exploit the geometry of the system in the encoder.

The default attention window is the image grid and corresponds to the default reference frame, where the origin is at the center of the image with horizontal and vertical axes. The above encoding attention window mechanism for a general system can be formalized
by considering the transformation from the default reference frame to the reference frame of each generalized coordinate. The transformation of a point $(x_d, y_d)$ in the default reference frame to a point $(x_t, y_t)$ in the target reference frame is captured by transformation $\mathcal{T}(x, y, \theta)$ corresponding to translation by $(x, y)$ and rotation by $\theta$ as follows: 
\begin{equation}
\label{eq:T}
    \begin{pmatrix}
        x_t \\ y_t \\ 1
    \end{pmatrix} = \mathcal{T}(x, y, \theta) 
    \begin{pmatrix}
        x_d \\ y_d \\ 1
    \end{pmatrix}, \quad \textrm{where} \quad 
    \mathcal{T}(x, y, \theta) = 
    \begin{pmatrix}
        \cos \theta & \sin \theta & x \\
        -\sin \theta & \cos \theta & y \\
        0 & 0 & 1
    \end{pmatrix}.
\end{equation}
So let $\mathcal{T}((x, y, \theta)^{\textrm{enc}}_j)$ be the transformation from default frame to reference frame of generalized coordinate $q_j$. 
This transformation might depend on  constant parameters $\mathbf{c}$ associated with the shape and size of the rigid bodies and generalized coordinates $\mathbf{q}_{-j}$, which denotes the vector of generalized coordinates with $q_j$ removed. Let $(x, y, \theta)^{\textrm{enc}}_j = T^{\textrm{enc}}_j(\mathbf{q}_{-j}, \mathbf{c})$. Both $\mathbf{q}_{-j}$ and $\mathbf{c}$ are learned from images. However, the function $T^{\textrm{enc}}_j$ is specified by leveraging the geometry of the system. In the CartPole example, $(q_1, q_2) = (r, \phi)$, and $T^{\textrm{enc}}_1\equiv(0, 0, 0)$ and $T^{\textrm{enc}}_2(q_1) = (q_1, 0, 0)$. In the Acrobot example, $(q_1, q_2) = (\phi_1, \phi_2)$, and $T^{\textrm{enc}}_1\equiv(0, 0, 0)$ and $T^{\textrm{enc}}_2(q_1, l_1) = (l_1 \sin q_1,l_1 \cos q_1 , 0)$. 

The shift of attention window can be implemented with a spatial transformer network (STN) \citep{NIPS2015_5854}, which generates a transformed image $\Tilde{\mathbf{x}}_{i_j}$ from $\mathbf{x}_{i_j}$, i.e., $\Tilde{\mathbf{x}}_{i_j} = \textrm{STN}(\mathbf{x}_{i_j}, \mathcal{T}(T^{\textrm{enc}}_j(\mathbf{q}_{-j}, \mathbf{c})))$.
In general, to encode $q_j$, we use a multilayer perceptron (MLP) that takes $\Tilde{\mathbf{x}}_{i_j}$ as input and provides the parameters of the $q_j$ distribution as output. For a translational coordinate $q_j$, we have ($q_j^\textrm{m}, \log q_j^\textrm{var}) = \textrm{MLP}^{\textrm{enc}}_j(\Tilde{\mathbf{x}}_{i_j})$. For a rotational coordinate $q_j$, we have $(\alpha_j, \beta_j, \log q_j^{\kappa}) = \textrm{MLP}^{\textrm{enc}}_j(\Tilde{\mathbf{x}}_{i_j})$, where the mean of the von Mises distribution is computed as $(\cos q_j^m, \sin q_j^m) = (\alpha_j, \beta_j) / \sqrt{\alpha_j^2 + \beta_j^2}$. 
We then take a sample from the $q_j$ distribution.\footnote{We use the reparametrization trick proposed by \citet{s-vae18} to sample from a von Mises distribution.} Doing this for every generalized coordinate $q_j$, we can get 
$(\mathbf{r}^\tau, \cos \pmb{\phi}^\tau, \sin \pmb{\phi}^\tau)$ from $\mathbf{x}^\tau$ for any $\tau$.\footnote{For a transformation that depends on one or more generalized coordinate, those generalized coordinates must be encoded before the transformation can be applied. In the CartPole example, we need to encode $r$ before applying the transformation to put the attention window centered at the cart to encode $\phi$. We use the mean of the distribution, i.e., $q_j^\textrm{m}$ or $(\cos q_j^m, \sin q_j^m)$, for those transformations that depend on $q_j$.} 
We will use $(\mathbf{r}^0, \cos \pmb{\phi}^0, \sin \pmb{\phi}^0)$ and $(\mathbf{r}^1, \cos \pmb{\phi}^1, \sin \pmb{\phi}^1)$.

\subsection{Velocity estimator}
\label{sec:vel_est}
To integrate Equation \eqref{eqn:angle-aware}, we also need to infer $(\dot{\mathbf{r}}^0, \dot{\pmb{\phi}^0})$, the initial velocity. We can estimate the initial velocity from the encoded generalized coordinates by finite difference. We use the following simple first-order finite difference estimator: 
\begin{align}
    \dot{\mathbf{r}}^0 &= (\mathbf{r}^{\textrm{m}1} - \mathbf{r}^{\textrm{m}0}) / \Delta t, \\
    \dot{\pmb{\phi}^0} &= \big((\sin \pmb{\phi}^{\textrm{m}1} - \sin \pmb{\phi}^{\textrm{m}0}) \circ \cos \pmb{\phi}^{\textrm{m}0}-(\cos \pmb{\phi}^{\textrm{m}1} - \cos \pmb{\phi}^{\textrm{m}0}) \circ \sin \pmb{\phi}^{\textrm{m}0} \big)/ \Delta t,
\end{align}
where $(\mathbf{r}^{\textrm{m}0}, \cos \pmb{\phi}^{\textrm{m}0}, \sin \pmb{\phi}^{\textrm{m}0})$ and $(\mathbf{r}^{\textrm{m}1}, \cos \pmb{\phi}^{\textrm{m}1}, \sin \pmb{\phi}^{\textrm{m}1})$ are the means of the  generalized coordinates encoded from the image at time $t=0$ and $t = \Delta t$, respectively. 
\citet{Jaques2020Physics-as-Inverse-Graphics:} proposed to use a neural network to estimate velocity. From our experiments, our simple estimator works better than a neural network estimator. 
\subsection{Coordinate-aware decoder}
\label{sec:decoder}
The decoder provides a distribution $P(\mathbf{x} | \mathbf{q}) = \mathcal{N} (\hat{\mathbf{x}}, \mathbf{I})$ as output, given a generalized coordinate $\mathbf{q}$ as input, where the mean $\hat{\mathbf{x}}$ is the reconstruction image of the image data $\mathbf{x}$. Instead of using a black box decoder, we propose a coordinate-aware decoder.  The coordinate-aware decoder first generates a static image $\mathbf{x}_i^c$ of every rigid body $i$ in the system, at a default position and orientation, using a MLP with a constant input, i.e., $\mathbf{x}_i^c = \textrm{MLP}^{\textrm{dec}}_i(1)$. 
The coordinate-aware decoder then determines $\hat{\mathbf{x}}_i$, the image of rigid body $i$ positioned and oriented on the image plane according to the generalized coordinates. The proposed decoder is inspired by the coordinate-consistent decoder by \citet{Jaques2020Physics-as-Inverse-Graphics:}. However, the decoder of \cite{Jaques2020Physics-as-Inverse-Graphics:} cannot handle a system of multiple rigid bodies with constraints such as the Acrobot and the CartPole, whereas our coordinate-aware decoder can. 




As  in \citet{Jaques2020Physics-as-Inverse-Graphics:}, to find $\hat{\mathbf{x}}_i$ we use the inverse transformation matrix $\mathcal{T}^{-1}((x, y, \theta)_i^{\textrm{dec}})$ where $\mathcal{T}$ is given by \eqref{eq:T}
and $(x, y, \theta)^{\textrm{dec}}_i = T_i^{\textrm{dec}}(\mathbf{q}, \mathbf{c})$.
In the CartPole example, $(q_1,q_2) = (r,\phi)$, and
$T_1^{\textrm{dec}}(r) = (r, 0, 0)$ and $T_2^{\textrm{dec}}(r, \phi) = (r, 0, \phi)$. In the Acrobot example, $(q_1, q_2) = (\phi_1,\phi_2)$, and
$T_1^{\textrm{dec}}(\phi_1) = (0, 0, \phi_1)$ and $T_2^{\textrm{dec}}(\phi_1, \phi_2) = (l_1 \sin \phi_1, l_1 \cos \phi_1, \phi_2)$.
The reconstruction image is then $\hat{\mathbf{x}} = (\hat{\mathbf{x}}_1, ..., \hat{\mathbf{x}}_n)$, where $\hat{\mathbf{x}}_i = \textrm{STN}(\mathbf{x}_i^c, \mathcal{T}^{-1}(T_i^{\textrm{dec}}(\mathbf{q}, \mathbf{c})))$.

\subsection{Loss function}
\label{sec:train_obj}
The loss $\mathcal{L}(\mathbf{X})$ consists of the sum of three terms:
\begin{equation}
    \mathcal{L}(\mathbf{X}) = \underbrace{-\mathbb{E}_{\mathbf{q}^0 \mathtt{\sim} Q} [\log P (\mathbf{x}^0 | \mathbf{q}^0)] \!+\! \textrm{KL}(Q(\mathbf{q}^0 | \mathbf{x}^0) || P(\mathbf{q}^0))}_{\textrm{VAE loss}} \!+\! \underbrace{\sum_{\tau = 1}^{T_\textrm{pred}} ||\hat{\mathbf{x}}^{\tau} \!-\! \mathbf{x}^{\tau}||^2_2 }_{\textrm{prediction loss}} \!+\! \underbrace{\lambda\sum_{j} \sqrt{\alpha_j^2 \!+\! \beta_j^2}}_{\textrm{vM regularization}}.
\end{equation}
The VAE loss is a variational bound on the marginal log-likelihood of initial data $P(\mathbf{x}^0)$. The prediction loss captures inaccurate predictions of the latent Lagrangian dynamics. The vM regularization with weight $\lambda$ penalizes large norms of vectors $(\alpha_j, \beta_j)$, preventing them from blowing up.
\begin{figure}
    \centering
    \includegraphics[width=0.84\linewidth]{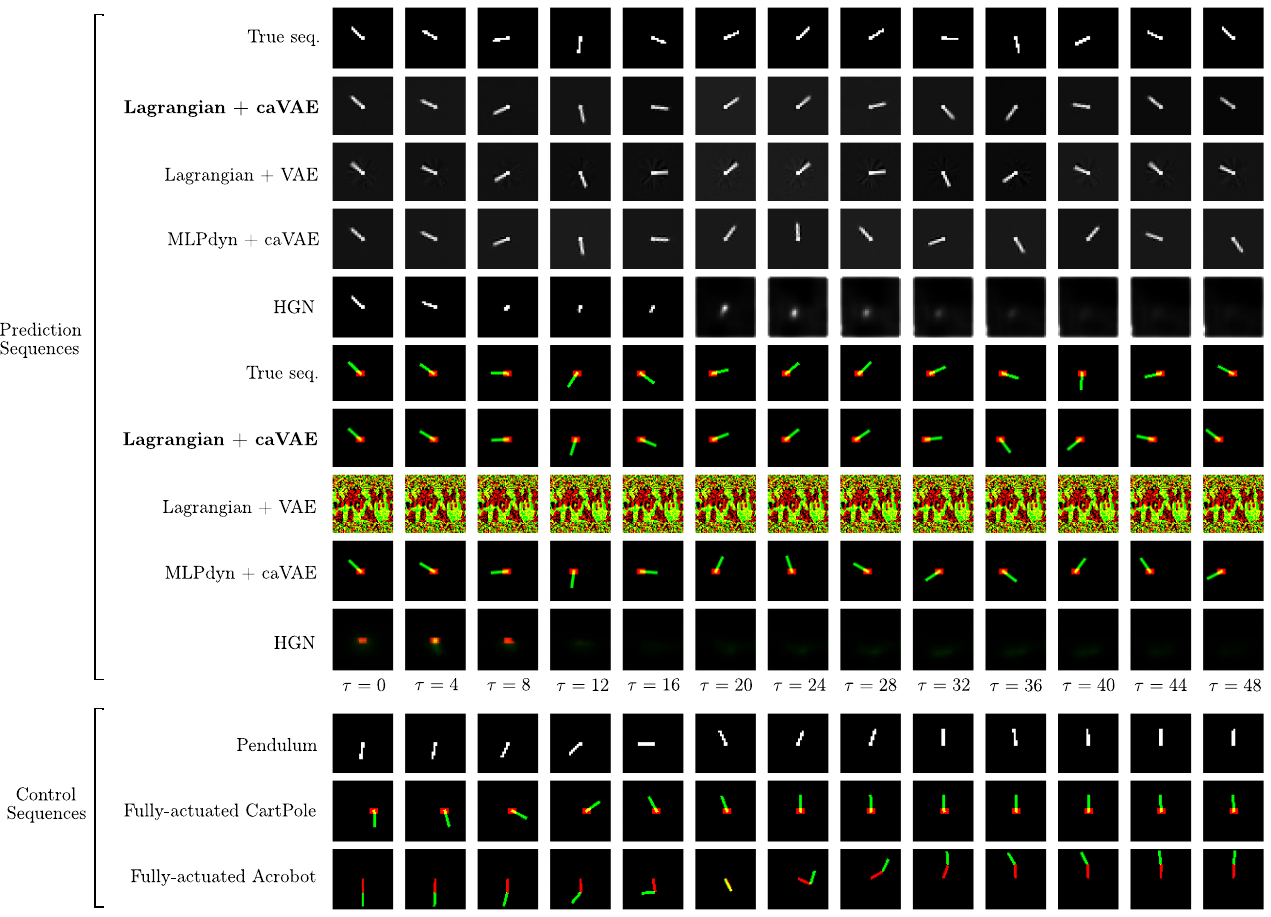}
    \vspace{-10pt}
    \caption{ \textbf{Top}: Prediction sequences of Pendulum and CartPole with a previously unseen initial condition and zero control. Prediction results show both Lagrangian dynamics and coordinate-aware VAE are necessary to perform long term prediction. \textbf{Bottom}:  Control sequences of three systems. Energy-based controllers are able to control the systems to the goal positions based on learned  dynamics and encoding with Lagrangian+caVAE.}
    \label{fig:prediction}
\end{figure}
\section{Results}
We train our model on three systems: the Pendulum, the fully-actuated CartPole and the fully-actuated Acrobot. The training images are generated by OpenAI Gym simulator \citep{2016arXiv160601540B}. The training setup is detailed in Supplementary Materials. As the mean square error in the image space is not a good metric of long term prediction accuracy \citep{NIPS2019_8304}, we report on the prediction image sequences of a previously unseen initial condition and highlight the interpretability of our model.
\vspace{-6pt}
\paragraph{Lagrangian dynamics and coordinate-aware VAE improve prediction.} As the Acrobot is a chaotic system, accurate long term prediction is impossible. Figure \ref{fig:prediction} shows the prediction sequences of images up to 48 time steps of the Pendulum and CartPole experiments with models trained with $T_{\textrm{pred}}=4$.
We compare the prediction results of our model (labelled as \emph{Lagrangian+caVAE}) with two model variants: \emph{MLPdyn+caVAE}, which replaces the Lagrangian latent dynamics with MLP latent dynamics, and \emph{Lagrangian+VAE}, which replaces the coordinate-aware VAE with a traditional VAE.  The traditional VAE fails to reconstruct meaningful images for CartPole, although it works well in the simpler Pendulum system. With well-learned coordinates, models that enforce Lagrangian dynamics result in better long term prediction, e.g., as compared to MLPdyn+caVAE, since Lagrangian dynamics with zero control preserves energy (see Supplementary Materials). 
\begin{figure}
    \centering
    \includegraphics[width=1.0\linewidth]{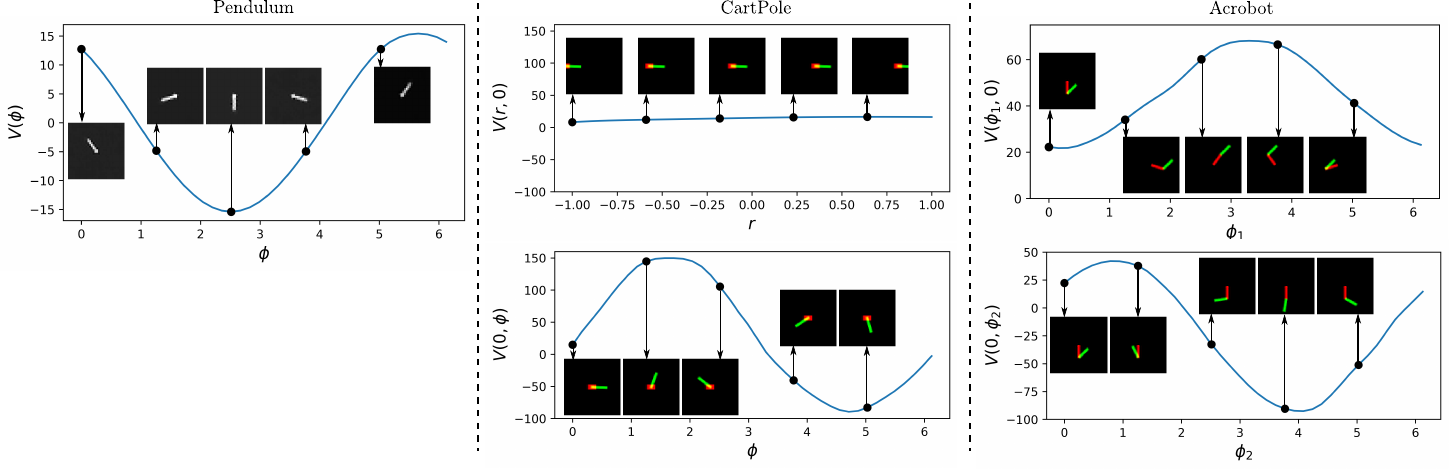}
    \vspace{-20pt}
    \caption{Learned potential energy with Lagrangian+caVAE of three systems and reconstruction images at selected coordinates. Both the learned coordinates and potential energy are interpretable. }
    \label{fig:potential_energy}
\end{figure}
\vspace{-6pt}
\paragraph{Learned potential energy enables energy-based control.} Figure~\ref{fig:potential_energy} shows the learned potential energy of the three systems and reconstruction images at selected coordinates with Lagrangian+caVAE. The learned potential energy is consistent with the true potential energy of those systems, e.g., the pendulum at the upward position has the highest potential energy while the pendulum at the downward position has the lowest potential energy. Figure~\ref{fig:potential_energy} also visualizes the learned coordinates. 
Learning interpretable coordinates and potential energy enables energy-based controllers. Based on the learned encoding and dynamics, we are able to control Pendulum and fully-actuated Acrobot to the inverted position, and fully-actuated CartPole to a position where the pole points upward. 
The sequences of images of controlled trajectories as shown in Figure~\ref{fig:prediction} are generated based on learned dynamics and encoding with Lagrangian+caVAE as follows. 
We first encode an image of the goal position $\mathbf{x}^{\star}$ to the goal generalized coordinates $\mathbf{q}^{\star}$. 
At each time step, the OpenAI Gym simulator of a system can take a control input, integrate one time step forward, and output an image of the system at the next time step. The control input to the simulator is $\mathbf{u}(\mathbf{q}, \dot{\mathbf{q}}) = \pmb{\beta}(\mathbf{q}) + \mathbf{v}(\dot{\mathbf{q}})$ which is designed as in Section~\ref{sec:energy-based-control} with the learned potential energy, input matrix, coordinates encoded from the output images, and $\mathbf{q}^{\star}$.  
\vspace{-3pt}
\begin{table}[h]
\begin{tabularx}{\textwidth}{c| Y| Y | Y | Y| Y | Y}
    \toprule
     Average pixel MSE & \multicolumn{2}{c|}{Pendulum (train $|$ test)} & \multicolumn{2}{c|}{CartPole (train $|$ test)} & \multicolumn{2}{c}{Acrobot (train $|$ test)} \\
     \midrule
     \textbf{Lagrangian + caVAE} & 1.82 & 1.83 & $\mathbf{2.78}$ & $\mathbf{2.81}$ & $\mathbf{3.06}$ & $\mathbf{3.14}$ \\
     Lagrangian + VAE & $\mathbf{1.52}$ & $\mathbf{1.56}$ & 9.41 & 10.30 & 10.48 & 10.78 \\
     MLPdyn + caVAE & 1.92 & 1.92 & 14.18 & 14.97 & 12.12 & 12.14 \\
     \midrule
     HGN & 0.55* & 0.71* & 2.81* & 2.91* & 2.61* & 2.67* \\
     \bottomrule
\end{tabularx}
\vspace{1pt}
\caption{Average pixel MSE of different models. All values are multiplied by $1e\!+\!3$. The starred values are calculated from trajectories without control.}
\label{tab:PixelMSE}
\end{table}
\vspace{-20pt}
\paragraph{Baselines} We set up two baseline models:  HGN \cite{Toth2020Hamiltonian} and PixelHNN \cite{NIPS2019_9672}. Neither model considers control input so we only use the trajectories with zero control in our dataset to train the models. Because of this, it is not fair to compare the pixel MSE of HGN in Table \ref{tab:PixelMSE} with those of other models. We implemented HGN based on the architecture described in the paper and used the official code for PixelHNN. From Figure \ref{fig:prediction}, we can see that HGN makes good predictions for the pendulum up to the training sequence length ($\tau=4$), but makes blurry long term predictions. HGN fails to generalize to the test dataset for the CartPole task. This is probably because HGN is not data efficient. In the original HGN experiment \cite{Toth2020Hamiltonian}, $30\times50K = 1.5M$ training images are used in the pendulum task, while here we use $20\times256=5120$ training images (with zero control). Moreover, the dimension of latent coordinate $\mathbf{q}$ is $4 \times 4 \times 16=256$. With such a fixed high dimension (for various tasks), HGN does not need to assume degrees of freedom. However, it might not be easy to interpret the learned $\mathbf{q}$, whereas the learned coordinates in our model are interpretable. PixelHNN does not use an integrator and requires a special term in the loss function. PixelHNN does not account for the rotational nature of coordinate $q$, so the reconstruction images around the unstable equilibrium point are blurry and the learned coordinates are not easy to interpret (see Supplementary Material). In the original implementation of PixelHNN \cite{NIPS2019_9672}, the angle of the pendulum is constrained to be from $-\pi/6$ to $\pi/6$, where a linear approximation of the nonlinear dynamics is learned, which makes the learned coordinates easy to interpret. This constraint does not hold for more challenging control tasks.
\vspace{-6pt}
\paragraph{Ablation study} To understand which component in our model contributes to learning interpretable generalized coordinates the most, we also report results of four ablations, which are obtained by (a) replacing the coordinate-aware encoder with a black-box MLP, (b) replacing the coordinate-aware decoder with a black-box MLP, (c) replacing the coordinate-aware VAE with a coordinate-aware AE, and (d) a Physics-as-inverse-graphics (PAIG) model \citep{Jaques2020Physics-as-Inverse-Graphics:}. We observe that the coordinate-aware decoder makes the primary contribution to learning interpretable coordinates, and the coordinate-aware encoder makes a secondary contribution. The coordinate-aware AE succeeds in Pendulum and Acrobot tasks but fails in the CartPole task. PAIG uses AE with a neural network velocity estimator. We find that PAIG's velocity estimator overfits the training data, which results in inaccurate long term prediction. Please see Supplementary Materials for prediction sequences of the ablation study.
\section{Conclusion}
We propose an unsupervised model that learns planar rigid-body dynamics from images in an explainable and transparent way by incorporating the physics prior of Lagrangian dynamics and a coordinate-aware VAE, both of which we show are important for accurate prediction in the image space. The interpretability of the model allows for synthesis of model-based controllers. 

\section*{Broader Impact}
We focus on the impact of using our model to provide explanations for physical system modelling. Our model could be used to provide explanations regarding the underlying symmetries, i.e., conservation laws, of physical systems. Further, the incorporation of the physics prior of Lagrangian dynamics improves robustness and generalizability for both prediction and control applications. 

We see opportunities for research applying our model to improve transparency and explanability in reinforcement learning, which is typically solved with low-dimensional observation data instead of image data. Our work also enables future research on vision-based controllers. The limitations of our work will also motivate research on unsupervised segmentation of images of physical systems. 

\begin{ack}
This research has been supported in part by ONR grant \#N00014-18-1-2873 and by the School of Engineering and Applied Science at Princeton University through the generosity of William Addy~’82.
Yaofeng Desmond Zhong would like to thank Christine Allen-Blanchette, Shinkyu Park, Sushant Veer and Anirudha Majumdar for helpful discussions.  
We also thank the anonymous reviewers for providing detailed and helpful feedback. 
\end{ack}
\bibliographystyle{unsrtnat}
\bibliography{ref}
\newpage
\renewcommand\appendixpagename{Supplementary Materials}
\appendix
\appendixpage
\renewcommand{\thesection}{S\arabic{section}}
\section{Summary of model assumptions}
Here we summarize all our model assumptions and highlight what are learned from data.

Model assumptions:
\begin{itemize}
    \item  \textbf{The choice of coordinates}: we choose which set of coordinate we want to learn and design coordinate-aware VAE accordingly. This is important from an interpretability perspective. Take the Acrobot as an example, the set of generalized coordinates that describes the time evolution of the system is not unique (see Figure 5 in \cite{sutton1996generalization} for another choice of generalized coordinates.) Because of this non-uniqueness, without specifying which set of coordinates we want to learn will let the model lose interpretability. 
    \item \textbf{Geometry}: this includes the using von Mises distribution for rotational coordinates and spatial transformer networks for obtaining attention windows. 
    \item \textbf{Time evolution of the rigid-body system can be modelled by Lagrangian dynamics}: since Lagrangian dynamics can modelled a broad class of rigid-body systems, this assumption is reasonable and help us gain interpretability. 
\end{itemize}

What are learned from data:
\begin{itemize}
    \item \textbf{Values of the chosen coordinates}: although we choose which set of coordinates to learn, the values of those coordinates cooresponding to each image are learned. 
    \item \textbf{Shape, length and mass of objects}: the coordinate-aware decoder learns the shape of the objects first and place them in the image according to the learned coordinates. The length of objects are modelled as learnable parameters in the model when those length need to be used (e.g. Acrobot). The mass of the objects are learned as $\mathbf{M}(\mathbf{s}_1, \mathbf{s}_2, \mathbf{s}_3)$.
    \item \textbf{Potential energy}: this is learned as $V(\mathbf{s}_1, \mathbf{s}_2, \mathbf{s}_3)$.
    \item \textbf{How control influences the dynamics}: this is learned as $\mathbf{g}(\mathbf{s}_1, \mathbf{s}_2, \mathbf{s}_3)$.
\end{itemize}

\section{Conservation of energy in Lagrangian dynamics}
In the following, we review the well known result from Lagrangian mechanics, which shows that with no control applied, the latent Lagrangian dynamics conserve energy, see, e.g., \citet{goldstein2002classical, hand1998analytical}.
\begin{theorem}[Conservation of Energy in Lagrangian Dynamics]
    Consider a system with Lagrangian dynamics given by Equation (3). If no control is applied to the system, i.e, $\mathbf{u} = \mathbf{0}$, then the total system energy $E(\mathbf{q}, \dot{\mathbf{q}}) = T(\mathbf{q}, \dot{\mathbf{q}}) + V(\mathbf{q})$ is conserved.
\end{theorem}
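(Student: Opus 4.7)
The plan is to show $dE/dt = 0$ along trajectories by direct computation, substituting in the equations of motion \eqref{eqn:2nd-order} with $\mathbf{u} = \mathbf{0}$. Two natural routes are available: a coordinate-free route through the Lagrangian, and a direct matrix-calculus route using the form of $T$ and $V$ in \eqref{eq:Lag}. I would present the Lagrangian route because it most clearly pinpoints where the non-conservative forcing enters.

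First I would observe that, because $T(\mathbf{q}, \dot{\mathbf{q}}) = \tfrac{1}{2}\dot{\mathbf{q}}^T \mathbf{M}(\mathbf{q}) \dot{\mathbf{q}}$ is homogeneous of degree two in $\dot{\mathbf{q}}$ and $V$ does not depend on $\dot{\mathbf{q}}$, Euler's identity gives $\dot{\mathbf{q}}^T \partial L / \partial \dot{\mathbf{q}} = 2T$, so the energy admits the Legendre-style representation
\begin{equation*}
E(\mathbf{q}, \dot{\mathbf{q}}) = T(\mathbf{q}, \dot{\mathbf{q}}) + V(\mathbf{q}) = \dot{\mathbf{q}}^T \frac{\partial L}{\partial \dot{\mathbf{q}}} - L(\mathbf{q}, \dot{\mathbf{q}}).
\end{equation*}
Next I would differentiate $E$ along a trajectory, using the chain rule on $L(\mathbf{q}(t), \dot{\mathbf{q}}(t))$, to obtain
\begin{equation*}
\frac{dE}{dt} = \ddot{\mathbf{q}}^T \frac{\partial L}{\partial \dot{\mathbf{q}}} + \dot{\mathbf{q}}^T \frac{d}{dt}\!\left(\frac{\partial L}{\partial \dot{\mathbf{q}}}\right) - \frac{\partial L}{\partial \mathbf{q}} \dot{\mathbf{q}} - \frac{\partial L}{\partial \dot{\mathbf{q}}} \ddot{\mathbf{q}} = \dot{\mathbf{q}}^T \!\left[\frac{d}{dt}\!\left(\frac{\partial L}{\partial \dot{\mathbf{q}}}\right) - \frac{\partial L}{\partial \mathbf{q}}\right]\!.
\end{equation*}

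Then I would invoke the Euler--Lagrange equation \eqref{eq:EL} with $\mathbf{Q}^{nc} = \mathbf{g}(\mathbf{q})\mathbf{u}$ to replace the bracketed quantity by $\mathbf{g}(\mathbf{q}) \mathbf{u}$, yielding $dE/dt = \dot{\mathbf{q}}^T \mathbf{g}(\mathbf{q}) \mathbf{u}$. Setting $\mathbf{u} = \mathbf{0}$ gives $dE/dt \equiv 0$, so $E$ is constant along trajectories. As a sanity check, I would verify the same conclusion by a direct calculation: differentiating $E = \tfrac{1}{2}\dot{\mathbf{q}}^T \mathbf{M}\dot{\mathbf{q}} + V$ gives $\ddot{\mathbf{q}}^T \mathbf{M}\dot{\mathbf{q}} + \tfrac{1}{2}\dot{\mathbf{q}}^T \dot{\mathbf{M}} \dot{\mathbf{q}} + (\partial V/\partial \mathbf{q})^T \dot{\mathbf{q}}$, and substituting $\mathbf{M}\ddot{\mathbf{q}} = -\tfrac{1}{2}\dot{\mathbf{M}}\dot{\mathbf{q}} - \partial V/\partial \mathbf{q}$ from \eqref{eqn:2nd-order} cancels each term in turn (using symmetry of $\mathbf{M}$ and hence of $\dot{\mathbf{M}}$).

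There is no genuine obstacle here; this is the classical energy-conservation argument for natural Lagrangians. The only points that deserve care are the tacit symmetry assumption on $\mathbf{M}(\mathbf{q})$ (which is standard for kinetic-energy matrices and is consistent with the parameterization $\mathbf{M}_{\psi_1}$ used in \S\ref{sec:latent_lag}) and the homogeneity step that identifies $\dot{\mathbf{q}}^T \partial L / \partial \dot{\mathbf{q}} - L$ with $T + V$; both are dispatched in a single line each, so the proof will be short.
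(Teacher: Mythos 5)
Your proof is correct. Your primary route differs from the paper's: you first establish the Legendre-style identity $E = \dot{\mathbf{q}}^T \partial L/\partial\dot{\mathbf{q}} - L$ via Euler's homogeneity theorem and then differentiate, so that the forced Euler--Lagrange equation \eqref{eq:EL} delivers $\mathrm{d}E/\mathrm{d}t = \dot{\mathbf{q}}^T\mathbf{g}(\mathbf{q})\mathbf{u}$ without ever touching $\mathbf{M}$ or $V$ explicitly. The paper instead works directly with $E = \tfrac{1}{2}\dot{\mathbf{q}}^T\mathbf{M}\dot{\mathbf{q}} + V$, expands $\mathrm{d}E/\mathrm{d}t$ by the chain rule, and substitutes the second-order form \eqref{eqn:2nd-order} to reach the same expression $\dot{\mathbf{q}}^T\mathbf{g}(\mathbf{q})\mathbf{u}$ --- which is exactly the computation you relegate to your ``sanity check,'' so that portion of your write-up coincides with the paper's proof. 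The trade-off is minor: your Lagrangian route is coordinate-free, generalizes immediately to any Lagrangian with velocity-homogeneous kinetic energy, and isolates precisely where the non-conservative forcing enters, whereas the paper's matrix computation is more self-contained given that the model is parameterized directly by $\mathbf{M}_{\psi_1}$, $V_{\psi_2}$, $\mathbf{g}_{\psi_3}$, and it makes the role of the symmetry of $\mathbf{M}$ (which the paper states explicitly as positive definiteness and symmetry of the mass matrix) visible in the cancellation. You correctly flag that same symmetry assumption as the one tacit hypothesis; nothing is missing.
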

\begin{proof}
    We compute the derivative of total energy with respect to time and use the fact that, for any real physical system, the mass matrix is symmetric positive definite.  We compute
    \begin{align*}
        \frac{\mathrm{d}E(\mathbf{q}, \dot{\mathbf{q}})}{\mathrm{d}t}  &= \frac{\partial E}{\partial \mathbf{q}} \dot{\mathbf{q}} + \frac{\partial E}{\partial \dot{\mathbf{q}}} \ddot{\mathbf{q}} \\
        &= \frac{1}{2} \dot{\mathbf{q}}^T \frac{\mathrm{d} \mathbf{M}(\mathbf{q})}{\mathrm{d} t}\dot{\mathbf{q}} + \dot{\mathbf{q}}^T  \frac{\mathrm{d} V(\mathbf{q})}{\mathrm{d} \mathbf{q}} + \dot{\mathbf{q}}^T \mathbf{M}(\mathbf{q})\ddot{\mathbf{q}}\\
        &= \dot{\mathbf{q}}^T \mathbf{g}(\mathbf{q}) \mathbf{u},
    \end{align*}
    where we have substituted in Equation (3).
    Thus, if $\mathbf{u}=\mathbf{0}$, the total energy $E(\mathbf{q}, \dot{\mathbf{q}})$ is conserved.
\end{proof}
With Lagrangian dynamics as our latent dynamics model, we automatically incorporate a prior of energy conservation into physical system modelling. This explains why our latent Lagrangian dynamics result in better prediction, as shown in Figure 3.

This property of energy conservation also benefits the design of energy-based controllers $\mathbf{u}(\mathbf{q}, \dot{\mathbf{q}}) = \pmb{\beta}(\mathbf{q}) + \mathbf{v}(\dot{\mathbf{q}})$. With only potential energy shaping $\pmb{\beta}(\mathbf{q})$, we shape the potential energy so that the system behaves as if it is governed by a desired Lagrangian $L_d$. Thus, the total energy is still conserved, and the system would oscillate around the global minimum of the desired potential energy $V_d$, which is $\mathbf{q}^{\star}$. To impose convergence to $\mathbf{q}^{\star}$, we add damping injection $\mathbf{v}(\dot{\mathbf{q}})$. In this way, we systematically design an interpretable controller.
\section{Experimental setup}
\subsection{Data generation}
All the data are generated by OpenAI Gym simulator. For all tasks, we combine 256 initial conditions generated by OpenAI Gym with 5 different constant control values, i.e., $u = -2.0, -1.0, 0.0, 1.0, 2.0$. For those experiments with multi-dimensional control inputs, we apply these 5 constant values to each dimension while setting the value of the rest of the dimensions to be 0. The purpose is to learn a good $\mathbf{g}(\mathbf{q})$. The simulator integrates 20 time steps forward with the fourth-order Runge-Kutta method (RK4) to generate trajectories and all the trajectories are rendered into sequences of images. 
\subsection{Model training}
\label{sec:model_training}
\paragraph{Prediction time step $T_{\textrm{pred}}$. }A large prediction time step  $T_{\textrm{pred}}$ penalizes inaccurate long term prediction but requires more time to train. In practice, we found that $T_{\textrm{pred}}=2,3,4,5$ are able to get reasonably good prediction. In the paper, we present results of models trained with $T_{\textrm{pred}}=4$.

\paragraph{ODE Solver.} As for the ODE solver, it is tempting to use RK4 since this is how the training data are generated. However, in practice, using RK4 would make training extremely slow and sometimes the loss would blow up. It is because the operations of RK4 result in a complicated forward pass, especially when we also use a relatively large $T_{\textrm{pred}}$. Moreover, since we have no access to the state data in the latent space, we penalize the reconstruction error in the image space. The accuracy gained by higher-order ODE solvers in the latent space might not be noticable in the reconstruction error in the image space. Thus, during training, we use an first-order Euler solver. As the Euler solver is inaccurate especially for long term prediction, after training, we use RK4 instead of Euler to get better long term prediction results with learned models. 

\paragraph{Data reorganization.} As our data are generated with 20 times steps in each trajectory, we would like to rearrange the data so that each trajectory contains $T_{\textrm{pred}} + 1$ time steps, as stated in Section 3. In order to utilize the data as much as possible, we rearrange the data $ ((\Tilde{\mathbf{x}}^{1}, \mathbf{u}^c), (\Tilde{\mathbf{x}}^{2}, \mathbf{u}^c)), ..., (\Tilde{\mathbf{x}}^{20}, \mathbf{u}^c))$ into $((\Tilde{\mathbf{x}}^{i}, \mathbf{u}^c), (\Tilde{\mathbf{x}}^{i+1}, \mathbf{u}^c), ..., (\Tilde{\mathbf{x}}^{i+T_{\textrm{pred}}}, \mathbf{u}^c))$, where $i=1, 2, ..., 20 - T_{\textrm{pred}}$. Now the length of each reorganized trajectory is $T_{\textrm{pred}} + 1$. 

\paragraph{Batch generation.}There are two ways of constructing batches of such sequences to feed the neural network model. The standard way is to randomly sample a certain number of sequences from the reorganized data. An alternative way is to only sample sequences from a specified constant control so that within each batch, the control values that are applied to the trajectories are the same. Of course, this control value would change from batch to batch during training. The reason behind this homogeneous control batch generation is the following. Except $\mathbf{g}(\mathbf{s}_1, \mathbf{s}_2, \mathbf{s}_3)$, all the vector-valued functions we parametrized by neural networks can be learned by trajectory data with zero control. From this perspective, all the trajectory data with nonzero control contributes mostly to learning $\mathbf{g}(\mathbf{s}_1, \mathbf{s}_2, \mathbf{s}_3)$. If this is the case, we can feed trajectory data with zero control more frequently and all the other data less frequently. All the results shown in the main paper and in the next section are trained with homogeneous control batch generation. We present the results of the standard batch generation in Section \ref{sec:std_batch_gen}.

\paragraph{Annealing on weight $\lambda$.} In Equation (16), the vM regularization with weight $\lambda$ penalizes large norms of vectors $(\alpha_j, \beta_j)$. In our experiments, we found that sometimes an annealing scheme such as $\lambda = \mathrm{min} (i_e / 8000, 0.375)$ results in better reconstruction images than a constant weight, where $i_e$ is the current epoch number during training.

\paragraph{Optimizer.} For all the experiments, we use the Adam optimizer to train our model.

\section{Ablation study details}
We report on the following four ablations:
\begin{enumerate}[label=(\alph*)]
    \item \textbf{tradEncoder + caDecoder}: replacing the coordinate-aware encoder with a traditional black-box MLP
    \item \textbf{caEncoder + tradDecoder}: replacing the coordinate-aware decoder with a traditional black-box MLP
    \item \textbf{caAE}: replacing the coordinate-aware VAE with a coordinate-aware AE
    \item \textbf{PAIG}: a Physics-as-inverse-graphics model
\end{enumerate}
Figure~\ref{fig:abla-pred} shows the prediction sequences of ablations of Pendulum and CartPole. Our proposed model is labelled as caVAE. Since long term prediction of the chaotic Acrobot is not possible, Figure~\ref{fig:abla-acro} shows the reconstruction image sequences of ablations of Acrobot. From the results, we find that PAIG and caAE completely fails in CartPole and Acrobot, although they work well in the simple Pendulum experiment. By replacing the coordinate-aware decoder, caEncoder+tradDecoder fails to reconstruct rigid bodies in CartPole and Acrobot. By replacing the coordinate-aware encoder, tradEncoder+caDecoder reconstructs correct images with well-learned coordinates in Pendulum and Acrobot, but in CartPole, the coordinates are not well-learned, resulting in bad prediction. Thus, we conclude that the coordinate-aware decoder makes the primary contribution to learning interpretable generalized coordinates and getting good reconstruction images, while the coordinate-aware encoder makes a secondary contribution. 
\setcounter{figure}{4}
\begin{figure}
    \centering
    \includegraphics[width=1\linewidth]{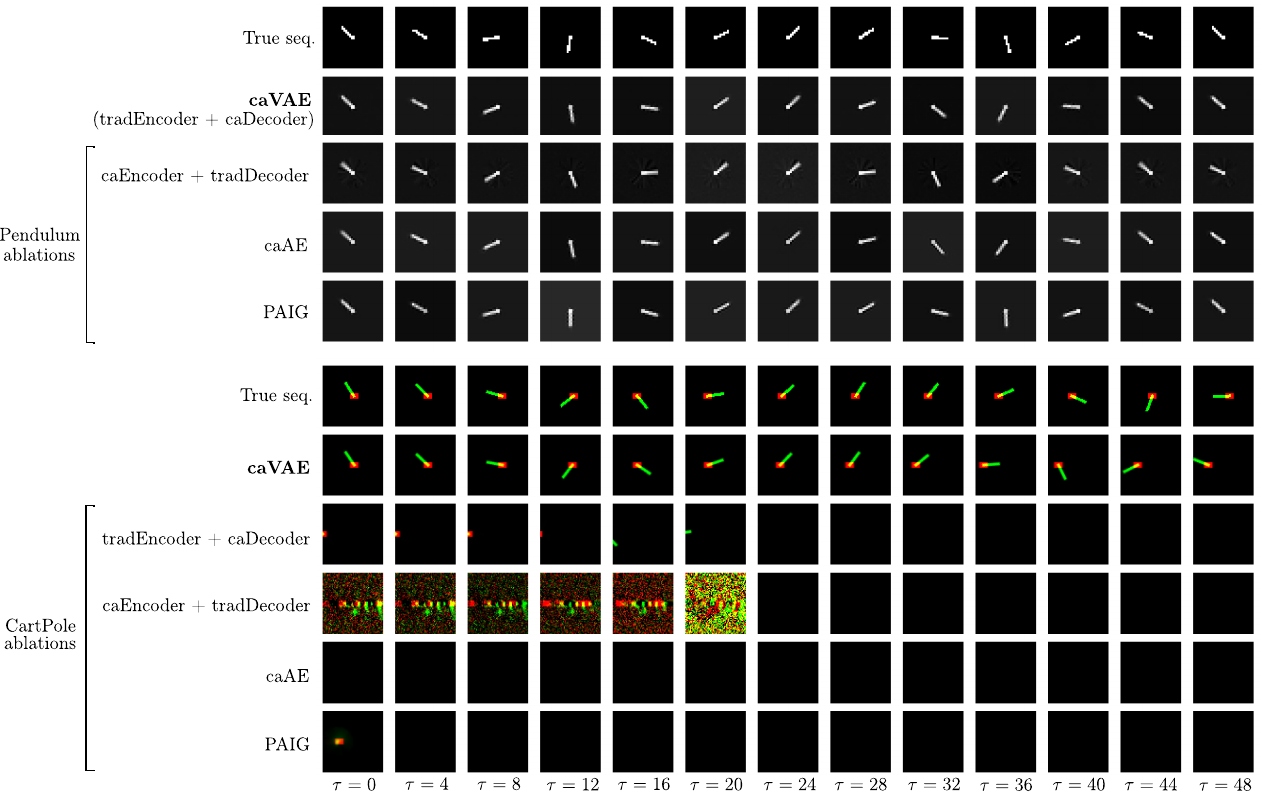}
    \caption{Prediction sequences of ablations of Pendulum and CartPole with a previously unseen initial condition and zero control. For the Pendulum experiment, the coordinate-aware encoder is a traditional MLP encoder. All the ablations get good predictions. For the CartPole experiment, all the ablations fail to get good predictions. The \emph{PAIG} is able to reconstruct the cart initially but it fails to reconstruct the pole and make prediction. The \emph{caAE} fails to reconstruct anything. The \emph{caEncoder+tradDecoder} fails to reconstruct meaningful rigid bodies. The \emph{tradEncoder+caDecoder} seems to extract meaningful rigid bodies but it fails to put the rigid bodies in the right place in the image, indicating the coordinates are not well learned. }
    \label{fig:abla-pred}
\end{figure}

\begin{figure}
    \centering
    \includegraphics[width=0.56\linewidth]{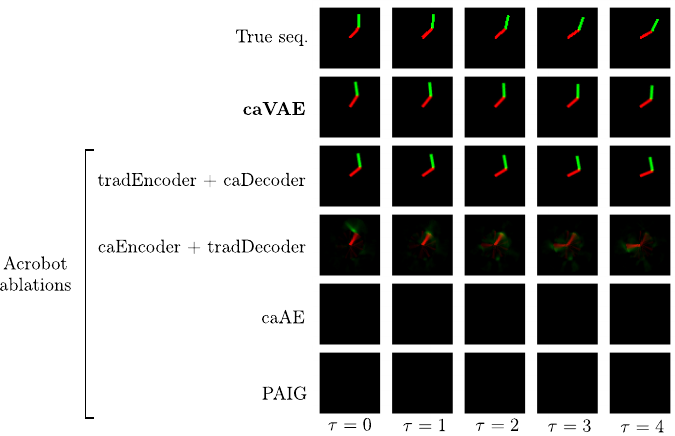}
    \caption{Reconstruction image sequences of ablations of Acrobot with a previously unseen initial condition and zero control. The \emph{PAIG} and \emph{caAE} fail to reconstruct anything. The \emph{caEncoder+tradDecoder} fails to reconstruct the green link at all. The \emph{tradEncoder+caDecoder} makes good reconstruction.}
    \label{fig:abla-acro}
\end{figure}

\section{Results with standard batch generation}
\label{sec:std_batch_gen}
From Figure 3 and Figure \ref{fig:prediction_std_batch_gen}, we can see that our model behaves consistently well in prediction. In our experiments, we observe that our model trained with standard batch generation fail to learn a good model in the Acrobot example for the purpose of control. How the batch generation and annealing scheme affect the quality of learned model is not easy to investigate. We leave this to future work. 
\begin{figure}
    \centering
    \includegraphics[width=1.0\linewidth]{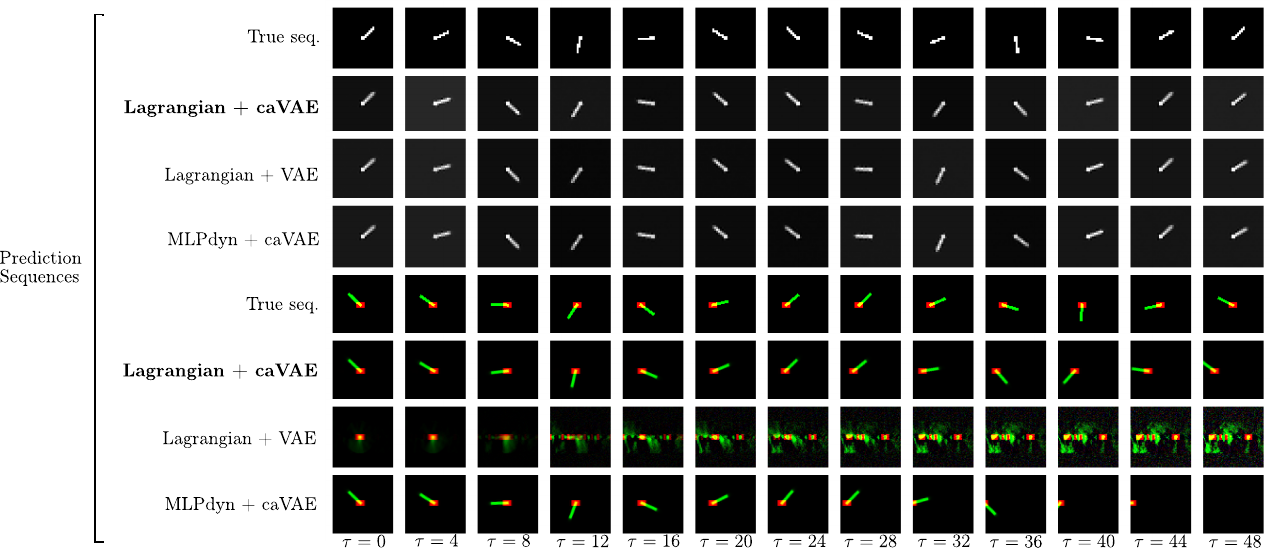}
    \caption{Prediction sequences of Pendulum and CartPole with a previously unseen initial condition and zero control, trained with standard batch generation, as explained in Section \ref{sec:model_training}. }
    \label{fig:prediction_std_batch_gen}
\end{figure}

\section{Blurry reconstruction of PixelHNN}
The original PixelHNN are trained with sequences of pendulum image data limited to a small angle range $[-\pi/6, \pi/6]$. Here we train the PixelHNN with randomly sampled angles as initial conditions. Figure \ref{fig:PixelHNN_0} and Figure \ref{fig:PixelHNN_1} shows the prediction results and the latent trajectory of image sequences. As we can see, the latent space does not have an interpretable shape as in the original implementation (see cell 6 at \href{https://github.com/greydanus/hamiltonian-nn/blob/master/analyze-pixels.ipynb}{this link}). The prediction sequences are blurry especially near the unstable equilibrium point. 
\begin{figure}
    \centering
    \includegraphics[width=0.9\linewidth]{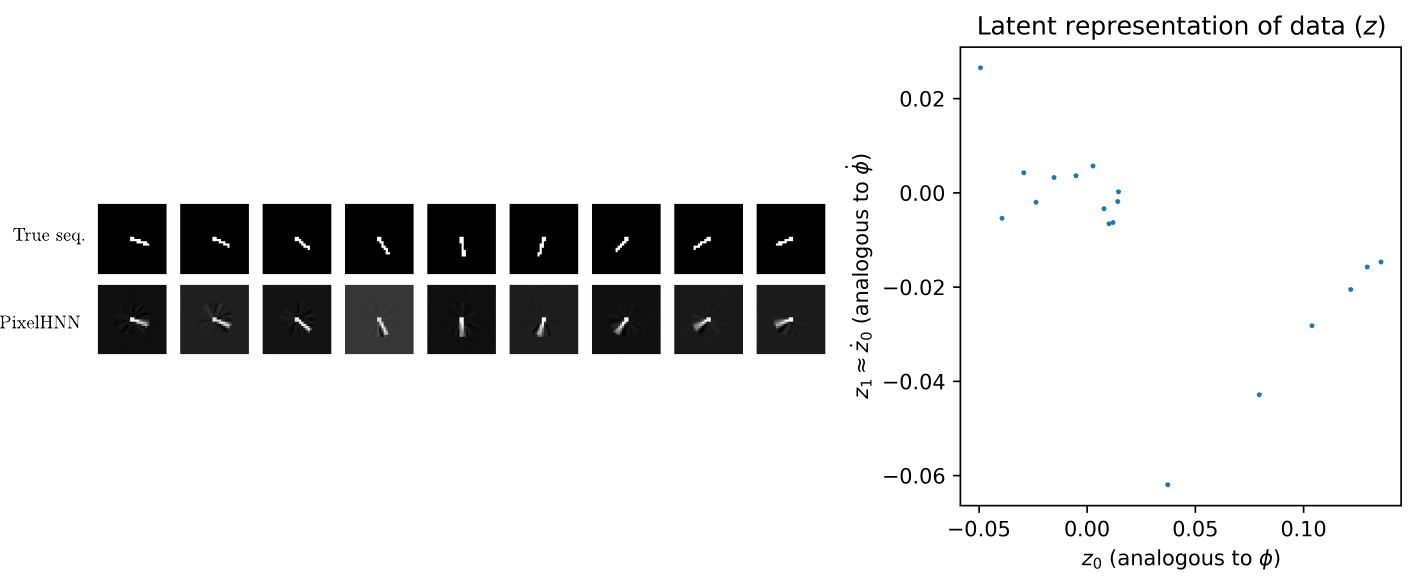}
    \caption{\textbf{Left}: True sequence and PixelHNN prediction of an initial position of the pendulum far away from the unstable equilibrium point. \textbf{Right}, the latent coordinate $z$ and its time derivative $\dot{z}$ of the PixelHNN prediction sequence.}
    \label{fig:PixelHNN_0}
\end{figure}

\begin{figure}
    \centering
    \includegraphics[width=0.9\linewidth]{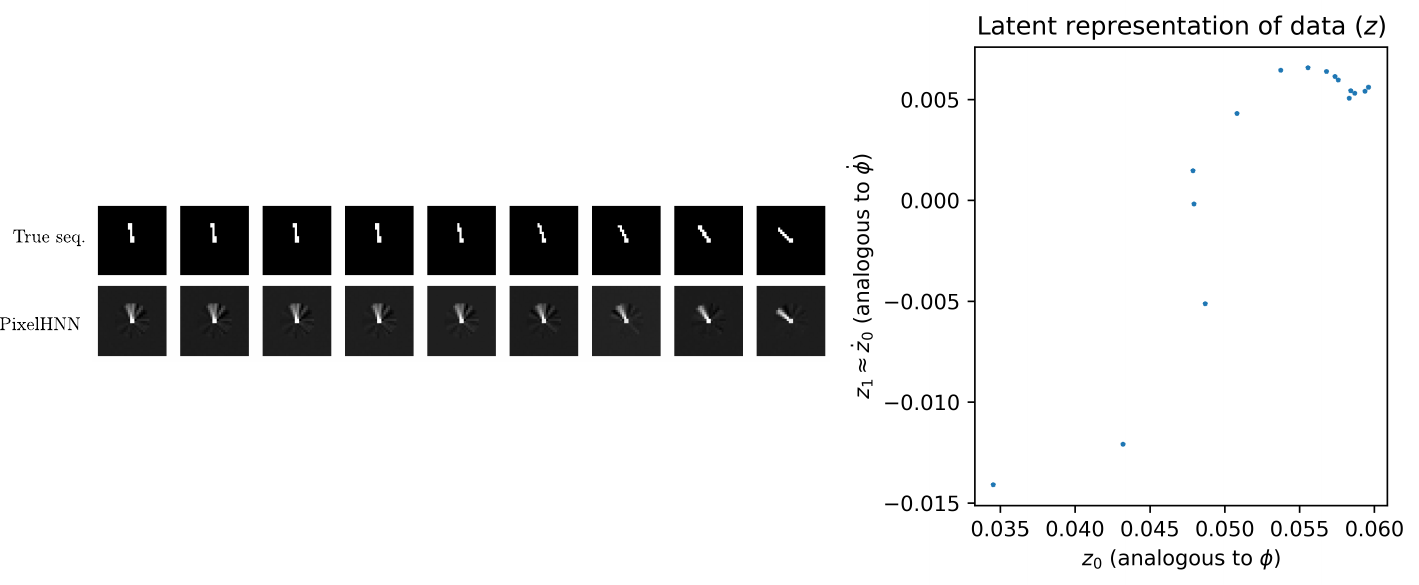}
    \caption{\textbf{Left}: True sequence and PixelHNN prediction of an initial position of the pendulum close to the unstable equilibrium point. \textbf{Right}, the latent coordinate $z$ and its time derivative $\dot{z}$ of the PixelHNN prediction sequence.}
    \label{fig:PixelHNN_1}
\end{figure}

\section{Error in Equation (3)}
In the previous versions of this paper

\begin{equation}
\frac{\mathrm{d} \mathbf{M}(\mathbf{q})}{\mathrm{d} t}\dot{\mathbf{q}} \neq
\frac{\partial}{\partial \mathbf{q}} \big(\dot{\mathbf{q}}^T \mathbf{M}(\mathbf{q}) \dot{\mathbf{q}} \big)
\end{equation}

\begin{equation}
\dot{\mathbf{q}}^T \frac{\mathrm{d} \mathbf{M}(\mathbf{q})}{\mathrm{d} t}\dot{\mathbf{q}} =
\dot{\mathbf{q}}^T \frac{\partial}{\partial \mathbf{q}} \big(\dot{\mathbf{q}}^T \mathbf{M}(\mathbf{q}) \dot{\mathbf{q}} \big)
\end{equation}

\end{document}